\newtheorem{df}{Definition}
\newtheorem{theorem}{Theorem}
\newtheorem{lm}{Lemma}
\title{Schema matching using Gaussian mixture models with\,Wasserstein distance
}
\author{
  Mateusz Przyborowski \\
  University of Warsaw, Warsaw, Poland \\
  QED Software, Warsaw, Poland \\
  \texttt{ml.przyborowsk@uw.edu.pl} \\
  \And
  Mateusz Pabiś \\
QED Software, Warsaw, Poland \\
  \texttt{mateusz.pabis@qed.pl} \\
  \And
  Andrzej Janusz \\
  University of Warsaw, Warsaw, Poland \\
  QED Software, Warsaw, Poland \\
  \texttt{andrzej.janusz@qed.pl} \\
  \And
  Dominik Ślęzak \\
  University of Warsaw, Warsaw, Poland \\
  QED Software, Warsaw, Poland \\
  \texttt{dominik.slezak@qed.pl} \\
}
\begin{document}
\maketitle
\thispagestyle{firststyle}

\begin{abstract}
Gaussian mixture models find their place as a powerful tool, mostly in the clustering problem, but with proper preparation also in feature extraction, pattern recognition, image segmentation and in general machine learning. When faced with the problem of schema matching, different mixture models computed on different pieces of data can maintain crucial information about the structure of the dataset. In order to measure or compare results from mixture models, the Wasserstein distance can be very useful, however it is not easy to calculate for mixture distributions. In this paper we derive one of possible approximations for the Wasserstein distance between Gaussian mixture models and reduce it to linear problem. Furthermore, application examples concerning real world data are shown.
\end{abstract}

\keywords{Gaussian mixture models \and Wasserstein distance 
\and schema matching \and big data}

\section{Introduction}
Mixture model is a probabilistic model that is able to infer subpopulations from total population without additional information (within the paradigm of unsupervised learning). Mixture models closely correspond to the mixture distributions of the probabilistic distributions of observations. In general, in the structure of mixture model, we make assumptions over latent variables that evaluate membership of each observation. Given the dataset, we can assume that it is a sample and then mixture model can estimate the parameters of the probability distributions that created points of this dataset, as well as assign each observation vector of probabilities indicating the original distribution. \\
Comparing different mixture models can be considered a generalization of the problem of comparing different distributions. From the viewpoint of optimal transport theory, the Wasserstein distance is an important method for measuring similarities and the maintenance of the explainable nature of mixture models. \\
In this paper we derive one of possible approximations of Wasserstein distances computed between mixture models, which may be reduced to linear optimization problem, and we present examples of usage

\section{Related work}
Gaussian mixture models with Wasserstein distance find their place in many areas of machine learning. In case of generative networks\cite{gaujac}, the use of Wasserstein distance has been proved to model more complex distributions. Autoencoder architectures equipped with Wasserstein distance (WAE), unlike variational autonencoders (VAE), allow to use deterministic mapping to a latent space\cite{smieja}. In image processing, Gaussian mixture models equipped with Wasserstein distance proved to be useful in tasks of color transfer and texture synthesis\cite{delon}. When dealing with heterogeneous data, mixture models have the advantage of simplicity and Wasserstein distance provides a suitable convergence rate\cite{ozkan}. Moreover, Wasserstein distance holds an important place in optimal transport theory\cite{peyr}\cite{takatsu}. 

\section{Problem formulation} 
Let $p(x|z)$ be a probability distribution of the given data with unknown vector of parameters $z$. Modeling the data using statistics and machine learning comes to modeling probability distribution. In real world applications, data is usually composed of multiple different probability distributions. Hence comes an elementary idea of modeling the data using mixture model, where each observation is assigned a probability of originating from the given probability distribution. The problem of choosing types of probability distributions for each component is usually skipped by assuming normality (Gaussian) of individual components, as normal distribution has important probabilistic properties. This approach is focused on a general summary of the very origin of the data, therefore its applications are widespread:
\begin{enumerate}
\item in cluster analysis, Gaussian mixture models (GMM) may be seen as an extension to K-means algorithm, yielding additional information about given observations;
\item in supervised learning, associating a type of labels from training data with one or more components may give us a similarity function between observations, based on whether they originate from the same probability distribution;
\item in natural language processing, distribution of words in documents can be modelled as mixture of different categorical distributions.
\end{enumerate}
\subsection{Big data}
Nowadays dealing with the big data is a popular issue. While focusing on a big volume of moderately dimensional data, mixture models can help with summary of the most common type of observations. Suppose that size of the data makes it impractical to repetitively perform calculations using the entire data. If we could summarize the data by creating representations, which allow to maintain most important features of the data, as well as to perform calculations yielding approximate but much faster solutions, we would save a lot of computing power and time in practical applications. Mixture models may be considered as one of such approaches, in which the data representation is made of the components understood as parameters of probability distributions. 
The mixture model of a given dataset is itself an approximation of the underlying probability distribution. While it gives a way to compare different observations from the same dataset, one may think about comparing different representations, i.e. different mixture models. Suppose that we split a labeled dataset into datasets based on label, then compute mixture model for each of such datasets. Under the assumption that different labels indicate a different distribution of features, comparing the mixture models allows us to conclude that two datasets originate from similar sources. This problem is more widely known as schema matching problem and is a common task in data integration and database management. 
\subsection{Comparing mixture models}
Mixture models, by the very way they are calculated, are based on the values of many observations. The only difference between the resulting models must be a manifestation of the different values of the respective observations. This interpretation yields a corollary that the difference between models could be measured by how much and how many observations making up one mixture model must be transformed in order for the resulting model to be more similar to the one with which it is compared. This intuition is realized in the Wasserstein metric, where the distance between two probability distributions is the amount of ''work'' that needs to be done in order to transform one distribution into another. Further explanation is provided in the following sections. \\
Gaussian mixture models allow us to summarize large datasets, while Wasserstein distance makes a tool for comparing different representations. 

\section{Gaussian mixture models}
Henceforth we will focus on the Gaussian mixture models, i.e. mixture models only with normal components.
\begin{df}
Let $n\in \mathbb{N}_+$, $ w= [w_1,...,w_n] $ s.t. $\forall i\leq n \quad 0\leq w_i\leq 1$ and $\sum_{i=1}^n w_i = 1$. A Gaussian mixture model of size $n$ components is a probability distribution $p$ defined as:
$$p=\sum_{i=1}^n w_i\mathcal{N}(\mu_i,\Sigma_i)$$
where $\mathcal{N}(\mu_i,\Sigma_i)$ is a normal distribution with $\mu_i$the vector of mean and $\Sigma_i$ the covariance matrix as parameters corresponding to the $i$-th component. 
\end{df}
Thereafter, if not stated otherwise, we will assume that distributions are defined over $\mathbb{R}^d$ with the dimension $d$. For the sake of simplicity, if it is not necessary we will omit the dimension. \\
Fitting Gaussian mixture model to a given data is a task of finding appropriate values for parameters $\{w_i, \mu_i, \Sigma_i\}_{i=1}^n$ s.t. the resulting model describes the dataset. Using maximum likelihood estimation (MLE) let $X = [X_1,...,X_k]$ be a\,vector of $k$ observations from our dataset. The joint probability distribution is then defined as:
$$p(X_1=x_1,...,X_k=x_k) = \prod_{j=1}^k \sum_{i=1}^n w_i\mathcal{N}(x_j|\mu_i,\Sigma_i)$$
Likelihood function is defined as:
$$L(w,\mu,\Sigma|X) = \prod_{j=1}^k \sum_{i=1}^n w_i\mathcal{N}(x_j|\mu_i,\Sigma_i)$$
$$\log(L(w,\mu,\Sigma|X)) = \sum_{j=1}^k \log(\sum_{i=1}^n w_i\mathcal{N}(x_j|\mu_i,\Sigma_i))$$
Unfortunately, differentiation and comparing to $0$ will not allow us to analytically solve this equation. In order to help with this, we will introduce a latent variables $z$ that explain which component generated given observation. Then:
$$p(X) = \sum_{i=1}^n p(x,z=i) = \sum_{i=1}^n p(z=i)p(x|z=i) = \sum_{i=1}^n w_i\mathcal{N}(\mu_i,\Sigma_i)$$
$$p(z_j=i_0|X_j) = \frac{p(X_j|z_j=i_0)p(z_j=i_0)}{p(X_j)} = 
\frac{w_{i_0}\mathcal{N}(\mu_{i_0},\Sigma_{i_0})}{\sum_{i=1}^n w_i\mathcal{N}(\mu_i,\Sigma_i)} = \gamma_{i_0}(z_j)$$
\subsection{EM algorithm}
We can notice that knowing either parameters or $\gamma$ allows us to compute the missing part. Furthermore, having a random guess about parameters, we can evaluate $\gamma$ probabilities and then estimate new parameters. Repeating this process, as well as measuring progress with log-likelihood, is a sketch of an iterative method known as the expectation–maximization (EM) algorithm. 
Let $\theta = (w,\mu,\Sigma)$, then:
$$Q(\theta, \theta_0) = \mathbb{E} \log p(X,Z|\theta) = \sum_z p(z|X,\theta^0)\log p(X,z|\theta)$$
We can simplify to this form:
$$Q(\theta, \theta_0) = \sum_z \gamma_(z) \log p(X,z|\theta)$$
And since:
$$\log p(X,Z|\theta) = \log \prod_{j=1}^k \prod_{i=1}^n w_i^{z_{i,j}} \mathcal{N}(X|\mu_i,\Sigma_i)^{z_{i,j}} = 
\sum_{j=1}^k \sum_{i=1}^n z_{i,j} (\log(w_i) + \log(\mathcal{N}(X|\mu_i,\Sigma_i)))$$
In summary, we have:
$$Q(\theta, \theta_0) = \sum_{j=1}^k \sum_{i=1}^n \gamma_i(z_j)(\log(w_i) + \log(\mathcal{N}(X|\mu_i,\Sigma_i)))$$
As for expectation phase (E), we evaluate $Q(\theta, \theta_0)$ given initial $\theta_0$. Maximization step (M) consists of solving $\theta_1 = \arg max_{\theta} Q(\theta, \theta_0)$. Solving for $Q$ is performed using Lagrange multipliers. These steps are repeated until the stop conditions are met. 
\begin{algorithm}
\caption{EM algorithm}\label{alg:cap}
\begin{algorithmic}
\Require $\theta, \gamma(Z)$
\State $\theta_0 \gets$ initial guess
\Repeat 
\State calculate $Q(\theta, \theta_0)$
\State $\theta_0 \gets \arg max_{\theta} Q(\theta, \theta_0)$
\Until{stop condition satisfied}
\end{algorithmic}
\end{algorithm}
\subsection{Bayesian Gaussian mixture models}
Finding parameters for GMM with EM algorithm does not include particular hyperparameters, as e.g. number of components. One can imagine that having $N$ observations and $N$ components in the form of Dirac delta functions would perfectly model a dataset, yet it would not be useful. The number of components can also be a very important parameter for the regularization of overfitting, a phenomenon in which the model may not be able to generalize outside of the training set. 
Bayesian interpretation allows us to use the prior probability distribution (Dirichlet distribution) to model the parameter space. Estimating the approximate posterior distribution over the parameters of a Gaussian mixture distribution yields the number of components from the dataset. 

\section{Wasserstein distance}
\begin{df}
Let $p$ and $q$ be two $d$-dimensional probability distributions and let $\Gamma(p,q)$ be a set of probability distributions whose marginals are $p$ and $q$ as the first and second factors respectively. Let $n\geq1$. The $n$-th Wasserstein distance $W_n$ between $p$ and $q$ is defined as:
$$W_n(p,q)=\inf_{\nu\in\Gamma(p,q)}(\int_{\mathbb{R}^d\times\mathbb{R}^d}||x-y||^nd\nu(x,y))^{1/n}=
\inf(\mathbb{E}(||X-Y||^n))^{1/n}$$
\end{df}
We may notice that looking at the measures corresponding to distributions $p$ and $q$, the set $\Gamma(p,q)$ is compact in the sense of weak convergence, therefore infimum is achievable. 
\begin{lm}
Let $\gamma\in\Gamma(p,q)$ and $x,y\in supp(\gamma)$ be $1$-dimensional ($d=1$) elements from support of $\gamma$ s.t. $x_1 < y_1$ and $y_2 < x_2$. Then the following inequality holds:
$$|x_1-y_2|^n + |y_1-x_2|^n < |x_1-x_2|^n + |y_1-y_2|^n$$
\end{lm}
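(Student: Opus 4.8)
The plan is to recognize this as the one-dimensional ``uncrossing'' estimate that underlies the optimality of monotone couplings, and to prove it purely from the convexity of $t\mapsto|t|^n$. Write $a=x_1$, $b=y_1$ for the first coordinates and $c=x_2$, $d=y_2$ for the second coordinates, so that the hypotheses become $a<b$ and $d<c$, and the goal is
$$|a-d|^n+|b-c|^n<|a-c|^n+|b-d|^n .$$
The argument rests on two elementary observations. First, a \emph{budget identity}: $(a-d)+(b-c)=(a-c)+(b-d)$, so the ``uncrossed'' and ``crossed'' pairings transport the same total amount. Second, an \emph{ordering}: from $a<b$ and $d<c$ one gets $a-c<a-d<b-d$ and $a-c<b-c<b-d$, hence both $a-d$ and $b-c$ lie strictly inside the open interval with endpoints $a-c$ and $b-d$.

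Combining the two: since $a-d$ and $b-c$ are interior points of that interval whose sum equals the sum of the endpoints, there is a single $\lambda\in(0,1)$ with $a-d=(1-\lambda)(a-c)+\lambda(b-d)$, and then necessarily $b-c=\lambda(a-c)+(1-\lambda)(b-d)$. Applying convexity of $\phi(t)=|t|^n$ to each of these two convex combinations and adding the results yields $\phi(a-d)+\phi(b-c)\le\phi(a-c)+\phi(b-d)$, which is the claimed inequality. Note that the membership hypothesis $x,y\in\mathrm{supp}(\gamma)$ plays no role in this algebra; it is what will later allow one to conclude that an optimal $\gamma$ contains no such crossing pair in its support, by contradicting minimality of the transport cost.

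I expect the genuine obstacle to be \emph{strictness} rather than the inequality itself. For $n>1$ this is clean: $\phi=|\cdot|^n$ is strictly convex, $\lambda\in(0,1)$, and the endpoints satisfy $a-c\neq b-d$, so both applications of convexity are strict and hence so is their sum. The delicate point is $n=1$, where $|t|$ is only affine on each half-line: if all of $a-c,\,a-d,\,b-c,\,b-d$ share a sign, the two matchings have \emph{equal} cost, so for $n=1$ the statement holds only as ``$\le$'' in general and the strict version additionally requires the relevant differences to straddle $0$. I would therefore present the convexity computation first for all $n\ge1$ to obtain ``$\le$'', and then append a short sign analysis to upgrade it to ``$<$'' in the non-degenerate case (which includes every $n>1$).
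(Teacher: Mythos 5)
Your argument is correct and is, at bottom, the same as the paper's: both write the two ``crossed'' costs as convex combinations of the two ``uncrossed'' costs with complementary weights and invoke convexity of $u\mapsto|u|^n$; indeed your $\lambda=\frac{x_2-y_2}{(y_1-x_1)+(x_2-y_2)}$ is exactly the paper's crossing parameter $t$. Your signed-difference formulation is the more robust of the two. The paper asserts the exact identity $|x_1-y_2|=(1-t)|x_1-x_2|+t|y_1-y_2|$ on the grounds that the crossing point lies between $x_1$ and $y_2$, which can fail (take $x_1=0$, $y_1=10$, $y_2=5$, $x_2=6$: the crossing point is $60/11>5$), whereas your budget-identity-plus-ordering derivation needs only the signed convex combinations, which hold unconditionally, with the absolute value absorbed into the single application of convexity. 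More importantly, your attention to strictness is a genuine correction rather than pedantry: the paper's Jensen step delivers only ``$\le$'' and the proof never upgrades it, and the strict inequality claimed in the lemma is in fact false for $n=1$ (with $x_1=0$, $y_1=1$, $y_2=10$, $x_2=11$ both matchings cost $20$), so the strict version does require $n>1$, where strict convexity of $|\cdot|^n$ together with $x_1-x_2\neq y_1-y_2$ and $\lambda\in(0,1)$ closes the gap. Since the paper only uses $W_2$ downstream, restricting the strict lemma to $n>1$ costs nothing.
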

\begin{proof}
We can notice that:
$$\exists t\in(0,1) \quad tx_1 + (1-t)x_2 = ty_1 + (1-t)y_2$$
Since points $x_1,y_2,tx_1+(1-t)x_2$ lie on the same line:
$$|x_1-y_2| = |x_1 - tx_1 - (1-t)x_2| + |tx_1 + (1-t)x_2 - y_2| = (1-t)|x_1-x_2| + t|y_1-y_2|$$
From Jensen's inequality we have:
$$|x_1-y_2|^n\leq (1-t)^n|x_1-x_2| + t^n|y_1-y_2|$$
Symmetrically we get the result: 
$$|x_2-y_1|^n\leq t^n|x_1-x_2| + (1-t)^n|y_1-y_2|$$
Summing gives us the inequality. 
\end{proof}
\begin{theorem}
Let $P,Q$ be cumulative distribution functions of distributions $p$ and $q$, by $P^{-1}$ and $Q^{-1}$ we mean inverse cumulative distribution functions or quantile functions; for $d=1$ we have:
$$W_n(p,q)=(\int_0^1|P^{-1}(t)-Q^{-1}(t)|^ndt)^{1/n}$$
\end{theorem}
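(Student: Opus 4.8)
The plan is to show that the coupling induced by the quantile functions — namely $\nu^* = (P^{-1}, Q^{-1})_\# \mathrm{Leb}_{[0,1]}$, the pushforward of Lebesgue measure on $[0,1]$ under $t\mapsto (P^{-1}(t), Q^{-1}(t))$ — is simultaneously a valid element of $\Gamma(p,q)$ and an optimal one, so that the infimum in the definition of $W_n$ is attained there and equals the claimed integral. First I would verify admissibility: since $P^{-1}$ pushes $\mathrm{Leb}_{[0,1]}$ forward to $p$ and likewise $Q^{-1}$ to $q$ (the standard inverse-transform fact for cumulative distribution functions), the two marginals of $\nu^*$ are exactly $p$ and $q$, so $\nu^* \in \Gamma(p,q)$. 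This already gives the upper bound
$$W_n(p,q)^n \leq \int_{\mathbb{R}\times\mathbb{R}} |x-y|^n \, d\nu^*(x,y) = \int_0^1 |P^{-1}(t) - Q^{-1}(t)|^n \, dt.$$

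Next I would establish the matching lower bound, and this is where Lemma~2 does the work. Let $\gamma$ be any coupling in $\Gamma(p,q)$; by the remark after the Wasserstein definition we may take $\gamma$ to be optimal (the infimum is attained). The claim is that an optimal $\gamma$ must be \emph{monotone}, i.e. its support contains no pair of points $(x_1,x_2)$ and $(y_1,y_2)$ with $x_1 < y_1$ but $x_2 > y_2$. Indeed, if such a "crossing" pair existed in $\mathrm{supp}(\gamma)$, Lemma~2 shows that rerouting mass along $(x_1, y_2)$ and $(y_1, x_2)$ strictly decreases the transport cost; making this rigorous requires the standard cyclical-monotonicity argument — take small balls around the two offending support points, swap an equal sliver of mass between the two "diagonal" pairings, and check the cost drops by a positive amount — contradicting optimality. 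Hence the support of an optimal $\gamma$ is a monotone (non-decreasing) set in $\mathbb{R}^2$ with marginals $p$ and $q$; a monotone coupling of two one-dimensional measures is unique and coincides with $\nu^*$ (one checks that a non-decreasing relation whose projections are $p$ and $q$ forces $y = Q^{-1}(P(x))$ on the relevant set, equivalently the pair $(x,y)$ traces out $(P^{-1}(t), Q^{-1}(t))$). Therefore the optimal cost equals $\int_0^1 |P^{-1}(t)-Q^{-1}(t)|^n\,dt$, matching the upper bound, and taking $n$-th roots gives the theorem.

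I expect the main obstacle to be the passage from the pointwise inequality of Lemma~2 to a genuine contradiction with optimality: Lemma~2 only compares four numbers, whereas the support of $\gamma$ may be uncountable and the "swap" must be performed on sets of positive $\gamma$-measure while preserving both marginals exactly. The clean way around this is to invoke that the optimal $\gamma$ is supported on a cyclically monotone set (a general fact from optimal transport, or provable here directly via a finite-support approximation), and then Lemma~2 is precisely the statement that a crossing pair violates cyclical monotonicity for the cost $c(x,y)=|x-y|^n$. A secondary technical point is handling atoms of $p$ or $q$ and the non-uniqueness of $P^{-1}$ on the (at most countable) set where $P$ is flat; this does not affect the integral, since it changes $P^{-1}$ only on a Lebesgue-null set, but it should be acknowledged so that the identification of the monotone coupling with $\nu^*$ is watertight.
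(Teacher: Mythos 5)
Your proposal follows essentially the same route as the paper's proof: invoke the crossing lemma to argue that the support of an optimal plan must be monotone, and then identify that monotone coupling with the quantile coupling $(P^{-1},Q^{-1})_{\#}\mathrm{Leb}_{[0,1]}$, whose cost is exactly $\int_0^1|P^{-1}(t)-Q^{-1}(t)|^n\,dt$. If anything, you are more careful than the paper, which skips the explicit admissibility/upper-bound step and the measure-theoretic justification of the mass swap and the treatment of atoms that you rightly flag as the genuine technical obstacles.
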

\begin{proof}
If $\gamma\in\Gamma(p,q)$ satisfies infimum in the definition of Wasserstein distance, then $x_2\leq y_2$. Otherwise, by lemma, it would mean that there exists a better fit, where swapping $y_1$ and $y_2$ gives smaller value. \\
Let $x\in supp(p), y\in supp(q)$; then we can notice that $(x,y)\in supp(\gamma) \iff P(x) = Q(y)$. Indeed, $$(x,y)\in supp(\gamma) \iff \gamma(\mathbb{R},(-\infty, y])=\gamma((-\infty, x], (-\infty, y])=\gamma((-\infty, x], \mathbb{R})$$
Therefore we conclude that:
$$\int_{\mathbb{R}\times\mathbb{R}}|x-y|^n d\gamma(x,y) = \int_{supp(\gamma)}|x-y|^n d\gamma(x,y) = \int_0^1|P^{-1}(t)-Q^{-1}(t)|^ndt$$
\end{proof}
Thereafter, if not stated otherwise, we will consider $W := W_2$, i.e. Wasserstein distance for $n=2$. 
\subsection{Connections with transportation theory}
While considering probability as a mass over some space, Wasserstein distance realises the optimal transport problem for transforming one probability distribution into another. Suppose we have a cost function $c$ and probability distributions $p,q$. A transport plan is a function $\gamma$ s.t. $\gamma(x,y)$ is a volume of mass that needs to be moved from $x$ to $y$. Cost of a transport plan $\gamma$ is:
$$\int\int\gamma(x,y)c(x,y)dxdy = \int c(x,y)d\gamma(x,y)$$
Depending on the selection of the function $c$, going with infimum over possible plans yields us cost of optimal transport. 

\section{Wasserstein distance between two Gaussian mixture models}
In order to calculate Wasserstein distance between Gaussian mixture models, we would need to calculate an inverse cumulative distribution function for mixture of normal distributions. Since it is analytically impossible, a similar idea is adopted. 
\begin{theorem}\cite{delon}
$$W_2^2(\mathcal{N}(\mu_1,\Sigma_1), \mathcal{N}(\mu_2,\Sigma_2)) = ||\mu_1-\mu_2||^2 + tr(\Sigma_1 + \Sigma_2 -2(\Sigma_1^{1/2}\Sigma_2\Sigma_1^{1/2})^{1/2})$$
\end{theorem}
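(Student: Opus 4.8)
The plan is to prove the closed-form Wasserstein-2 distance between two Gaussians by explicitly exhibiting an optimal transport plan and then verifying it meets the lower bound coming from the definition. First I would reduce to the centered case: since translating $\mathcal{N}(\mu_i,\Sigma_i)$ by $-\mu_i$ is an isometry-like operation on the cost, one checks that $W_2^2(\mathcal{N}(\mu_1,\Sigma_1),\mathcal{N}(\mu_2,\Sigma_2)) = \|\mu_1-\mu_2\|^2 + W_2^2(\mathcal{N}(0,\Sigma_1),\mathcal{N}(0,\Sigma_2))$, by expanding $\mathbb{E}\|X-Y\|^2 = \|\mu_1-\mu_2\|^2 + \mathbb{E}\|(X-\mu_1)-(Y-\mu_2)\|^2$ whenever the coupling has the right marginals and matches first moments (and noting the optimal coupling can be taken to match means). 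So it remains to handle mean-zero Gaussians and show the transport term equals $\operatorname{tr}(\Sigma_1+\Sigma_2-2(\Sigma_1^{1/2}\Sigma_2\Sigma_1^{1/2})^{1/2})$.

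Next I would restrict the infimum to \emph{Gaussian} couplings: a jointly Gaussian $(X,Y)$ with marginals $\mathcal{N}(0,\Sigma_1)$, $\mathcal{N}(0,\Sigma_2)$ is determined by a cross-covariance block $C = \mathbb{E}[XY^\top]$, subject to the constraint that the joint covariance $\begin{pmatrix}\Sigma_1 & C\\ C^\top & \Sigma_2\end{pmatrix}$ is positive semidefinite. For any such coupling, $\mathbb{E}\|X-Y\|^2 = \operatorname{tr}(\Sigma_1) + \operatorname{tr}(\Sigma_2) - 2\operatorname{tr}(C)$, so minimizing the cost over Gaussian couplings amounts to \emph{maximizing} $\operatorname{tr}(C)$ subject to the PSD constraint. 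I would solve this by a linear-algebra argument: using the Schur complement, PSD-ness of the joint covariance (with $\Sigma_1$ invertible, the singular case following by a limiting argument) is equivalent to $\Sigma_2 \succeq C^\top \Sigma_1^{-1} C$, and a standard computation — substituting $C = \Sigma_1^{1/2} K \Sigma_2^{1/2}$ and using the characterization of contractions — shows $\operatorname{tr}(C)$ is maximized with optimal value $\operatorname{tr}\big((\Sigma_1^{1/2}\Sigma_2\Sigma_1^{1/2})^{1/2}\big)$, attained by $C_* = \Sigma_1^{1/2}(\Sigma_1^{1/2}\Sigma_2\Sigma_1^{1/2})^{1/2}\Sigma_1^{-1/2}$. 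Plugging this back yields exactly the claimed formula as an upper bound on $W_2^2$.

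Finally I would argue the Gaussian coupling is globally optimal, i.e. no non-Gaussian coupling does better. The clean way is a duality/lower-bound argument: the optimal transport map between $\mathcal{N}(0,\Sigma_1)$ and $\mathcal{N}(0,\Sigma_2)$ is, by Brenier's theorem, the gradient of a convex function, and the linear map $x \mapsto A x$ with $A = \Sigma_1^{-1/2}(\Sigma_1^{1/2}\Sigma_2\Sigma_1^{1/2})^{1/2}\Sigma_1^{-1/2}$ is symmetric positive semidefinite, hence \emph{is} such a gradient ($A = \nabla(\tfrac12 x^\top A x)$), and it pushes $\mathcal{N}(0,\Sigma_1)$ forward to $\mathcal{N}(0,\Sigma_2)$; Brenier's uniqueness then certifies optimality, and the cost of this deterministic plan equals the Gaussian-coupling optimum computed above. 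The main obstacle is precisely this last step — establishing that restricting to Gaussian couplings loses nothing — since the naive argument only gives an upper bound; handling it rigorously requires either invoking Brenier/Knott–Smith optimality of gradients of convex maps, or constructing an explicit Kantorovich dual potential (a quadratic form) whose value matches. The PSD/Schur-complement optimization in the middle step is routine but must be done carefully in the degenerate (non-invertible $\Sigma_i$) case via approximation.
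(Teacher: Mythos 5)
The paper does not actually prove this statement: it imports the closed form directly from \cite{delon} (the formula itself goes back to Dowson--Landau, Olkin--Pukelsheim and Givens--Shortt), so there is no in-paper argument to compare against. Your proposal is a correct, self-contained proof and it is essentially the standard one from that literature. The three steps are all sound: (i) the reduction to centered Gaussians is automatic because any coupling with the prescribed marginals has $\mathbb{E}[X]=\mu_1$ and $\mathbb{E}[Y]=\mu_2$, so the cross term in $\mathbb{E}\|X-Y\|^2$ vanishes for \emph{every} admissible coupling, not just the optimal one; (ii) the trace maximization over cross-covariances $C$ with the joint covariance positive semidefinite is correctly solved --- writing $C=\Sigma_1^{1/2}K\Sigma_2^{1/2}$ with $K$ a contraction turns $\sup \mathrm{tr}(C)$ into the trace norm of $\Sigma_2^{1/2}\Sigma_1^{1/2}$, which equals $\mathrm{tr}\bigl((\Sigma_1^{1/2}\Sigma_2\Sigma_1^{1/2})^{1/2}\bigr)$; and (iii) you correctly identify that step (ii) alone only gives an upper bound, and you close the gap with the Brenier/Knott--Smith certificate: the symmetric positive semidefinite map $A=\Sigma_1^{-1/2}(\Sigma_1^{1/2}\Sigma_2\Sigma_1^{1/2})^{1/2}\Sigma_1^{-1/2}$ satisfies $A\Sigma_1A=\Sigma_2$, is the gradient of the convex quadratic $\frac12 x^{\top}Ax$, and its cost matches the Gaussian-coupling optimum. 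Your caveats about degenerate covariances (handled by perturbation $\Sigma_i+\varepsilon I$ and continuity of $W_2$) are exactly the right ones. In short: where the paper delegates to a citation, you supply the full argument, and the argument is right.
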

\begin{df}\footnote{Similar definition is proposed in \cite{delon}, but we only consider the finite case.}
Let $p_1=\sum_{i=1}^n w_{1i}\mathcal{N}(\mu_{1i},\Sigma_{1i}), p_2=\sum_{i=1}^m w_{2i}\mathcal{N}(\mu_{2i},\Sigma_{2i})$ be two GMMs. We define approximate Wasserstein distance between $p_1$ and $p_2$ in the following way:
$$\hat{W}(p_1,q_1) = min\{\sum_{i=1}^n\sum_{j=1}^m t_{ij}W(\mathcal{N}(\mu_{1i},\Sigma_{1i}),\mathcal{N}(\mu_{2j},\Sigma_{2j}))\;|\;{t_{i,j}\geq0:\forall i\sum_j t_{i,j}=w_{1i} \land\forall j\sum_i t_{i,j}=w_{2j}}\}$$
\end{df}
Proposed Wasserstein distance between mixture models is a straightforward extension of intuitions lying behind original Wasserstein distance. From the transport point of view, we are looking for the best assignment between corresponding mixtures. This can be extended to infinite dimensional form, i.e. when the number of components is not finite. The main difference is that here we do not seek the best transportation plan understood as a function or a measure, but a\,matrix of size $n\times m$. 
\subsection{Dual problem}
Let us consider a more general problem; let $a_i,b_j,c_{ij},d_{ij}$ be given nonnegative integers, the problem is following:
$$minimize\quad \sum_{i=1}^n\sum_{j=1}^m d_{ij}x_{ij}$$ 
$$subject\:to:\quad\forall i=1,...,n \sum_{j=1}^m x_{ij}=a_i\land\forall j=1,...,m \sum_{i=1}^n x_{ij}=b_j$$
$$\forall i,j\quad 0\leq x_{ij}\leq c_{ij}$$
In the case of Wasserstein distance $c_{ij}\equiv 1$, $d_{ij}$ is a Wasserstein distance between $i$-th component from first mixture and $j$-th component from second mixture, ${a_i}_{i=1}^n$ are weights of components from first mixture and ${b_j}_{j=1}^m$ are weights of components from second mixture. \\
The dual problem has the following form:
$$maximize\quad\sum_{i=1}^n a_i\alpha_i+\sum_{j=1}^m b_j\beta_j+\sum_{i=1}^n\sum_{j=1}^m c_{ij}\gamma_{ij}$$
$$subject\:to:\quad\forall i=1,...,n\forall j=1,...,m\quad d_{ij}\geq\alpha_i+\beta_j+\gamma_{ij}\land\gamma_{ij}\leq 0$$
It is worth to notice that the dual form immediately yields us a possible solution: setting all $\alpha_i=\beta_j=\gamma_{ij}=0$. 
\subsection{Solving with linear programming}
Finding Wasserstein distance between two mixture models comes down to solving particular transport problem. Therefore we can use the notations of graph theory: with a directed complete bipartite graph we have a cost over each edge being a Wasserstein distance between the given components, a capacity at each edge corresponding to the weight of the component and amount of flow, i.e. the value sought. We can use the network simplex algorithm to solve such a problem.
\subsection{GMM with Wasserstein distance as a classifier}
We present the algorithm for classification problem using Gaussian mixture models and Wasserstein distance. 
\begin{algorithm}
\caption{GMM with Wasserstein distance}\label{alg:gmm}
\begin{algorithmic}
\Require $\{(x_i,y_i)\}_{i=1}^k$ - training dataset with $m$ different classes
\Require $\{U_i\}_{i=1}^n$ - $n$ test datasets; within dataset each observation has the same label
\State 1. Split training dataset into $m$ sets $\{Z_j\}_{j=1}^,$ based on the label
\State 2. Fit Gaussian mixture model $p_j$ for each $Z_j$
\State 3. Fit Gaussian mixture model $q_i$ for each $U_i$
\State 4. Compute Wasserstein between $p_j$ and $q_i$ for each $i$, $j$
\State 5. Label the set $U_i$ with a label of the set $Z_{j_0}$, where $j_0 = argmin_j \hat{W}(p_j,q_i)$
\end{algorithmic}
\end{algorithm}

\section{Experiments}
\subsection{STL-10 dataset}
In the first experiment we used features extracted from an autoencoder neural network, which was trained in the case of image recognition. Original dataset is the STL-10 dataset\cite{coats}, which consists of total $13000$ images labeled as one of ten possible classes. Extracted representation has dimensionality of $512$, therefore during experiment we randomly choose some subset of dimensions. Received results has been compares with $L2$ distance and quadratic Jensen-Rényi divergence. 
\begin{figure}[H]
  \centering
  \includegraphics[width=1\textwidth]{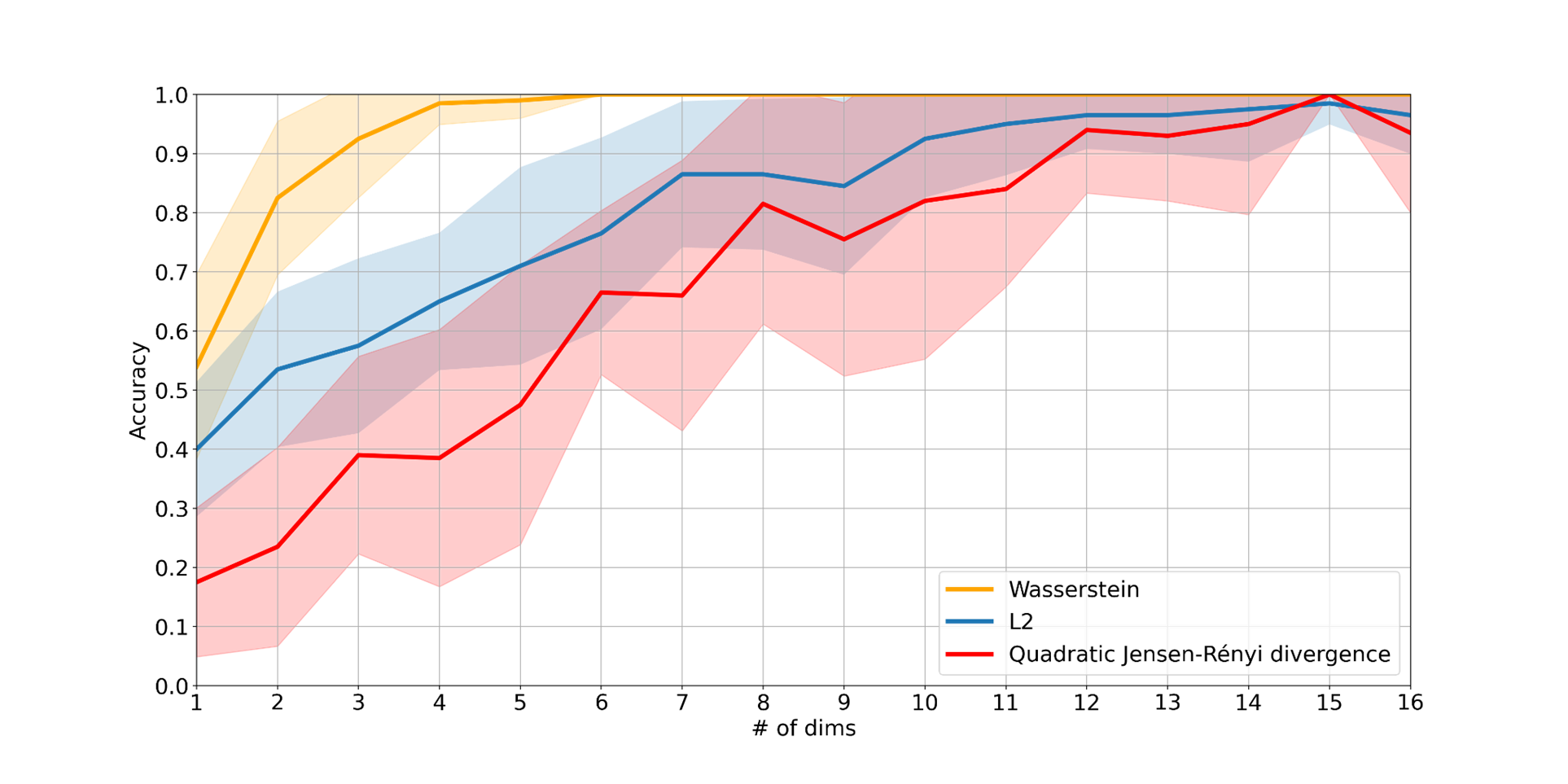}
  \caption{OX axis indicates number of selected components. At each step experiment was repeated $20$ times. Solid lines stands for mean results, while shaded area indicates $\pm1$ standard deviation from the mean. }
  \label{stl:stl}
\end{figure}
The task of matching data types using proposed method, i.e. Gaussian mixture models with Wasserstein distance, relies only on applied preprocessing. While autoencoder representations can be summaries themselves\cite{ja}, given the large volume of the data, our method is far more practical.
\subsection{Text data}
In the second experiment we operated on a large volume ($13000000$) of short text data, divided into chunks of examples with the same label. The task was to predict label of entire chunk. Preprocessing consists of transforming characters into features based on length and frequency of occurrence of given letters and signs. During experiment we performed $5$ $2$-fold cross-validations. Results were compared with a different approach (KNN algorithm) and similar approach with different distance function ($L2$).
\begin{figure}[H]
  \includegraphics[width=.7\textwidth]{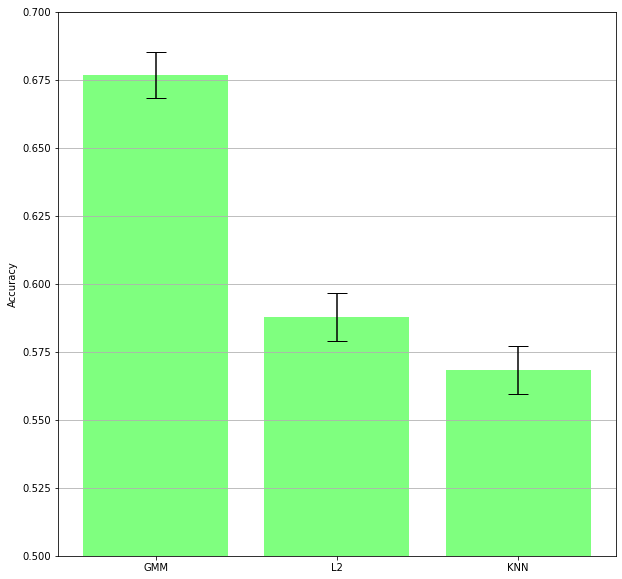}
  \centering
  \caption{Mean results from $5$ $2$-fold cross-validations. Standard deviations are indicated. }
  \label{ibi:ibi}
\end{figure}
Results conclude that proposed framework works better than compared methods.
\section{Conclusions and future work}
We derived the approximate easy easy-to-calculate version of the Wasserstein distance between Gaussian mixture models, that may find many applications in various fields of machine learning. In the case of big data, the greatest advantage is the avoidance of multiple calculations over the entire dataset, as the obtained summary allow for the estimation of similarity based only on the compacted data representations. \\
The future work may involve a statistical analysis of the properties extracted by Gaussian mixture models from a dataset, e.g. selecting important observations that may have had the greatest impact on the parameters. 

\bibliographystyle{unsrtnat}

\end{document}